\DeclareMathOperator*{\argmax}{argmax}
\newtheorem{theorem}{Theorem} 
\title{Theory of Mind for \\ Deep Reinforcement Learning in Hanabi}
\author{%
  Andrew Fuchs\\
  Naval Information Warfare Center - Pacific\\
  \texttt{afuchs@spawar.navy.mil} \\
 \And
 Michael Walton\\
  Naval Information Warfare Center - Pacific\\
  \texttt{mwalton@spawar.navy.mil} \\
  % examples of more authors
 \And
  Theresa Chadwick\\
  Naval Information Warfare Center - Pacific\\
  \texttt{theresa.chadwick@spawar.navy.mil} \\
 \And
  Doug Lange\\
  Naval Information Warfare Center - Pacific\\
  \texttt{dlange@spawar.navy.mil} \\
  % Coauthor \\
  % Affiliation \\
  % Address \\
  % \texttt{email} \\
  % \AND
  % Coauthor \\
  % Affiliation \\
  % Address \\
  % \texttt{email} \\
  % \And
  % Coauthor \\
  % Affiliation \\
  % Address \\
  % \texttt{email} \\
  % \And
  % Coauthor \\
  % Affiliation \\
  % Address \\
  % \texttt{email} \\
}
\begin{document}

\maketitle

\begin{abstract}
The partially observable card game Hanabi has recently been proposed as a new AI challenge problem due to its dependence on implicit communication conventions and apparent necessity of theory of mind reasoning for efficient play. In this work, we propose a mechanism for imbuing Reinforcement Learning agents with a theory of mind to discover efficient cooperative strategies in Hanabi. The primary contributions of this work are threefold: First, a formal definition of a computationally tractable mechanism for computing hand probabilities in Hanabi. Second, an extension to conventional Deep Reinforcement Learning that introduces reasoning over finitely nested theory of mind belief hierarchies. Finally, an intrinsic reward mechanism enabled by theory of mind that incentivizes agents to share strategically relevant private knowledge with their teammates. We demonstrate the utility of our algorithm against Rainbow, a state-of-the-art  Reinforcement Learning agent.
\end{abstract}

\section{Introduction}
In order to make decisions that effectively achieve their goals, humans must base their choices on the behavior of other agents whose actions may impact their respective outcomes. To navigate this complexity, humans utilize a \textit{theory of mind}: the ability of an agent to ascribe beliefs, preferences and intentions to other agents with which they interact in a shared environment. Such a theory of mind enables humans to reason recursively over nested belief states: \textit{``I believe you believe I believe,"} and make inferences about the intent or behavior of others based on this belief. For instance, when told the story, \textit{``A man walks into a room, looks around then leaves,"} human subjects often ascribe the man's behavior to his mental state and intention, that is, \textit{``The man entered the room to look for something and forgot what it was"} \cite{9780262023849}. The ability to comprehend our world and one another in terms of Theory of Mind (ToM) is fundamental to our everyday experience; however, relatively few works in reinforcement learning (RL) have successfully combined such capabilities into multi-agent decision making.

\section{Background}
%%%%%%%%%%%%%%%%%%%%%%%%%%%%%%%%%%%%%%%%%%%%%%%%%%%%%%%%%%%%
%%%%%%     From our white-paper 														%%%%%
%%%%%%%%%%%%%%%%%%%%%%%%%%%%%%%%%%%%%%%%%%%%%%%%%%%%%%%%%%%%
\subsection{Theory of Mind}
In \cite{Byom2013}, the authors identify two relevant tools humans use in order to infer each others' mental states: interpreting the motivation governing the actions of others (intent) and inferences over shared context (beliefs). These two aspects are particularly salient to Hanabi; although partial shared context is provided by the fully observable cards, crucial context exists within the minds of the players. For instance, if a particular player would like to estimate the likelihood that another player plays their Red 2 card, one need not only guess their decision making process, but their belief state as well. This raises the question: How can the players efficiently reach agreement on conventions of play and grounded context?

As proposed in \cite{1902.00506}, we argue that ToM reasoning will be crucial for Hanabi and other related multi-agent tasks, particularly in ad-hoc settings which necessitate few-shot inference of other agents beliefs, decision making process, and hinting strategy. We begin to tackle this challenging domain by proposing a method to address the representation of nested belief in deep reinforcement learning and expand on this approach by deriving a mechanism for achieving coherent hinting strategies.

\subsection{Hanabi}
There are several aspects of the Hanabi game which make it worthy of investigation. First, it is an imperfect information game with players that have asymmetric knowledge about the environment state. Further, Hanabi is a cooperative game which requires coordination, credit assignment between agents, and learning grounded communication conventions for sharing information.

In Hanabi, agents communicate via direct information by sharing hints as well as ``play", ``discard", and ``draw" actions they may choose to take. Hints in Hanabi are restricted to only true hints and require that all cards matching the given hint be noted when the hint is given. Additionally, the set of valid hints restricts communication in such a way that complete information about a card cannot be achieved through a single hint, unless the game state provides additional context that forces the hint to explicitly define the card (e.g. only one card of a particular color remains, so a hint of that color would necessarily define the card).

We limited the scope of this paper to the self-play Sample Limited posing of the problem. Under the Sample Limited regime, agents' total experience (interactions with the environment) is limited to 100 million timesteps.

\subsection{Related Work}
\subsubsection{Rule-based Methods}
\cite{1902.00506} note several heuristic agents for playing Hanabi [HatBot \cite{cox2015make}, SmartBot \cite{smartBot}, WTFWThat \cite{WTFWThat}], which, though performant, are reliant on hand crafted rules and communication conventions.

\subsubsection{Reinforcement Learning Methods}
Closest and most relevant to our method is \textit{Policy Belief Iteration} (Pb-It) \cite{Tian2018LearningMI}, which proposes a method for learning implicit communication protocols between agents in the card game bridge. More specifically, the agents need to infer information about their hands to their partner via bidding actions. Pb-It learns a belief update function which predicts the other player's private information given their action. Their method also includes an auxiliary reward that incetivizes agents to communicate with bids that provide the largest reduction in uncertainty for their partner's belief. However, the belief update and policy functions in Pb-It must have access to private information of other players during training, which compromises the practicality of the method. Our algorithm differs crucially in that we are able to exploit a similar reward incentive without violating the rules of the game during training by explicitly sharing private information. Our algorithm uses approximations to other agents' belief distributions as a proxy for this ground truth information.

\textit{Actor-Critic-Hanabi-Agent} (ACHA) \cite{mnih2016asynchronous} is an actor-critic model parameterized with deep neural networks. ACHA uses the Importance Weighted Actor-Learner to avoid stale gradients and population-based training for hyperparameter optimization.

\textit{Rainbow} \cite{hessel2018rainbow} combines several state of the art DQN techniques, such as Double DQN, Noisy Networks, Prioritized experience replay, and Distributional RL. For our experiments, we train an implementation of Rainbow provided by \cite{1902.00506} as a comparative baseline.

\textit{Bayesian Action Decoder} (BAD) \cite{foerster2018bayesian} uses a Bayesian belief update conditioned on the acting agent's policy. In BAD, all agents utilize a public belief which incorporates all card-related common knowledge.

\subsection{Problem Statement}
Partially observable cooperative multi-agent decision making problems with stochastic state transitions are commonly modeled as Decentralized POMDPs (Dec-POMDPs). A common alternative which more naturally expresses ToM concepts of belief, preference, and desire is the Interactive POMDP (I-POMDP) framework \cite{Doshi2004AFF}. In this work, we propose a simplified ToM framework which embeds k-nested beliefs present in I-POMDPs in the belief state of a Dec-POMDP.

A Dec-POMDP is defined by $\langle S, \{A_i\}, T, R, \{\Omega_i\}, O, \gamma \rangle$. Where $S$ defines the global state, $\{A_i\}$ denotes a set of agent action spaces for each agent $i$. The state transition function $T$ defines the probability of next state observations given a current state and agent action, $T: S \times A \rightarrow \Delta(S)$ where $A = \times_i A_i$ and $\Delta(S)$ defines a set of probability distributions over $S$. $\Omega_i$ is the set of possible observations for agent $i$ with $O: S \times A \rightarrow \Delta(\Omega)$ where $\Omega = \times_i\Omega_i$. The reward function, $R: S \times A \rightarrow \mathbb{R}$, defines the global objective the team of agents seek to maximize. Lastly, $\gamma \in [0, 1)$ is a discount factor, intuitively encoding how an agent should trade off between achieving near-term versus long term rewards. Given an Dec-POMDP and a set of possible joint policies $\pi = \bigcup_i \pi_i$, our objective is to obtain an optimal joint policy which maximizes expected cumulative reward, such that $\pi^*=max_{\pi} \mathbb{E} [ \Sigma_t \gamma^t R(s_t, a_t) | \pi ]$.

Absent from the Dec-POMDP literature is a formal statement of agents' beliefs about each other's mental states. An alternative modeling framework which augments the POMDP with nested reasoning over beliefs and agents' decision making process may be found in the Interactive POMDP (I-POMDP) [e.g. \cite{Doshi2004AFF} and  \cite{Karkus2017QMDPNetDL}]. The I-POMDP is Dec-POMDP; however, it introduces the notion of a set of \textit{interacting states} $IS_i = S \times M_j$, where for each agent $i$ there is defined a set of possible models $M_j$ for other agents $j$ in the system. A rigorous treatment of I-POMDPs and solution methods for planning problems may be found in \cite{Doshi2004AFF}. Due to the recursive formulation of I-POMDPs, agents' beliefs and intentional models may be infinitely nested; making belief updates and optimality proofs intractable. Alternatively, finitely nested I-POMDPs introduce a parameter $k$ called the \textit{strategy level} which limits the depth of the recursion. For notational compactness and without loss of generality, we consider a two-player setting with players $i$ and $j$. Agent $i$'s 0-th level belief $b_{i}^0$ are probability distributions over the physical state $S$. Its 0-th level type $\Theta_{i,0}$ is a tuple containing its 0-th level belief and agent models $M_{j,0}$.\footnote{0-th level types are therefore POMDPs, as the other agents' actions are folded into  $T$, $O$ and $R$ as noise (as in naive single-agent RL applied to multi-agent settings)} An agent's first level beliefs are probability distributions over the physical state \textbf{and} the 0-th level models of the other agent. Its first level model consists of types up to level 1, second level beliefs are defined in terms of first level models, and so on and so forth.

We propose a simple extension to Dec-POMDPs to leverage ToM reasoning motivated by the I-POMDP depth finitely nested belief model. For each agent $i$, we define a ToM belief set $B^i = S_0 \times S_1 \times ... \times S_k$ which defines the set of possible worlds our agent may exist in; including at each depth $k$ other agents' perspectives defined over the corresponding state space. Therefore, a belief at depth $k$ is given by  $b_k^{ij...} = \Delta(S_k)$, such that $|ij...| = k+1$. For clarity, we denote the relationship between agents with a superscript on the belief indicating the direction of the relationship. For even $k$, the relationship is self-reflective, that is, $i$ models their own belief through recursion on themselves and $j$, for odd $k$, the relationship models other players. For instance, $b^{iji}_2$ is \textit{"I believe you believe I believe"}. We model $i$'s level $k$ utility function $Q_k^i$ associated with a particular belief state as $Q_k^i: b_0 ... \times b_k \rightarrow \mathbb{R}^{|A_i|}$. Hence, we are able to obtain a tractable ToM architecture similar to I-POMDPs though simple enough for model-free value function approximation.

\section{Methods}
We propose the use of nested theory of mind (ToM) belief levels to generate a mechanism for performing and incentivizing communication for MARL agents in the Hanabi environment. These belief levels not only allow agents to estimate the portion of the state that is not fully observable, but also supply a method of simulating the same belief estimate for other agents. The ability to simulate a belief level for other agents allows each to recursively reason about the other agents' ToM beliefs. This recursive reasoning can be taken to arbitrary depth. Our method includes two levels of belief; we argue this is sufficient for efficient gameplay in Hanabi and analogous to human reasoning on this task. We demonstrate that ToM-based agents will more effectively learn the mechanics of the game to play through information-maximising hint conventions and belief update strategies.

\subsection{Nested Beliefs in Hanabi ToM}
Given the nature of the Hanabi game, ToM agents must first estimate a belief regarding their own cards, which we denote $b_0^i$ for player $i$. The $b_0^i$ distribution provides the probability of a hand of cards for player $i$ given the observed cards for other players, hints received, cards in the discard pile, and cards on the firework piles. The information provided to the agents allows for a reduction in the sample space by eliminating cards which violate the hints observed as well as any cards that have been exhausted through gameplay. With this reduced sample space, we can calculate a hand probability based on sampling without replacement, noted in equation \ref{belief_eq09}.

The following definitions outline the characteristics of our belief levels. Let $F = \{R, G, B, W, Y\}\label{belief_eq01}$ and $V = \{1, 2, 3, 4, 5\}\label{belief_eq02}$ denote the sets of possible colors and ranks respectively for cards in the game. Then the possible knowledge for any given card is defined by $H_F = F \cup\{\varnothing\}\label{belief_eq03}$ and $H_V = V \cup\{\varnothing\}\label{belief_eq04}$, where $\varnothing$ denotes no hint given. Therefore, for each player $i\in [1,N]$, their cards and hand knowledge are defined by $C^i = (c_1,\dots c_\eta), c_k\in F\times V, \eta\in [4,5]\label{belief_eq05}$ and $H^i = (h_1,\dots,h_\eta), h_k\in H^F\times H^V, \eta\in [4,5]\label{belief_eq06}$, respectively. Given the observed hands and hints regarding their own cards, players can generate their knowledge set and the unique set of hints:
\begin{align}
    K^i &= \{(c_k, h_k): k\in [0, \eta], c_k\in C^i, h_k\in H^i\}\label{belief_eq07}\\
    H^* &= \{h:\exists_{k\in [1,\eta ]}\ni h=h_k, \forall h_k\}\label{belief_eq08}
\end{align}
The knowledge set and hint set provide the cases required to calculate a hand's probability given the current game state. Given $K^i$ and $H^*$, players estimate the probabilities for belief level 0 and belief level 1 using:
\begin{align}
    b_0^i&\sim P(C^i|H^i,\eta) := \frac{\prod_{(c,h)\in K^i}(n_\nu(c|h))_{\delta^i_c(h)}}{\prod_{h\in H^*}({|\nu(h)|})_{\lambda^i(h)}}\label{belief_eq09}\\
    \hat{b}_0^i&:=\frac{\prod_{(c,h)\in K^i}n_\nu(c|h)}{\prod_{h\in H^*}{|\nu(h)|}}\label{belief_eq10}\\
    \hat{b}_1^{ij} &:= \frac{1}{Z}\sum_{C^i\sim\hat{b}_0^i}P(C^i|H^i, \nu)\hat{b}_0^j\label{belief_eq11}\\
    \hat{b}_0^j &:= P(C^j|H^j,C^i,\hat{\nu}^j)\label{belief_eq12}
\end{align}
where:
\begin{align*}
    \nu &= \textrm{multiset} (F\times V, n_\nu(c)) \\
    \delta^i_c(h) &= \textrm{multiplicity of } (c, h) \textrm{ for hand } C^i\\
    \lambda^i(h) &= \textrm{multiplicity of } h \textrm{ for hints } H^i\\
    \nu(h) &= \textrm{sub-multiset of } \nu \textrm{ given hint } h\\
    n_\nu(c|h) &= \textrm{multiplicity of card } c \textrm{ in }\nu(h)\\
    n_{C^i}(c) &= \textrm{multiplicity of card } c \textrm{ in hand } C^i\\
    |\nu(h)| &= \sum_{c\in Supp(\nu(h))}n_{\nu}(c|h)\\
    \hat{\nu}^j &= \{F\times V, \hat{n}_\nu\}\\
    \hat{n}_\nu(c) &= n_\nu(c) + n_{C^i}(c)  - n_{\hat{C}^i}(c),\\
    & (\hat{C}^i \textrm{ is sampled, } C^i \textrm{ is observed by } j)
\end{align*}

For clarity, a working example of the above (equations 3 - 6) is provided in Appendix A.2. For tractability and to avoid enumerating all possible hands, we estimate each player's belief $b_0^i$ using the estimated belief defined by equation \ref{belief_eq10}, which allows us to represent the card probabilities as independent samples with replacement when generating sample hands.

We defined two methods for sampling hands for player $i$ when calculating $b_1^{ij}$. In the first case, we generate a maximum a posteriori hand $C_{MAP} = \argmax_{C^i} P(C^i|H^i)$, which is then used to augment the observed player hands from player $j$'s perspective. The augmented player hand is used to generate $\hat{\nu}^j$, which is then used in the calculation of $b_0^j$. The resulting $b_0^j$ is then used as $b_1^{ij}$. This heuristic approximation is used to mimic a particular interpretation of human play: humans may reason about one another's beliefs with respect to their most likely hand, rather than integrate over all possible hands weighted by $b_0^i$.

In the alternative case where we do not use the $C_{MAP}$ method, hands are sampled proportional to equation \ref{belief_eq10} and then selected given $b_0^i(C^i) > 0$ until the number of samples matches the requested number of hands. The resulting sampled hands are then used as input for equations \ref{belief_eq11} and \ref{belief_eq12}, which estimate the level-0 belief for other agents based on player $i$'s estimated hand, and generate a weighted average based on probability of $C^i\sim b_0^i$. The resulting weighted average is normalized in equation 5 and used to represent player $i$'s estimate of the 0-level belief for agent $j$.

\subsection{Incentivising Efficient Hints}

Given that each agent has access to belief level $b_1^{ij}$ by equations \ref{belief_eq10} and \ref{belief_eq12}, it is natural to consider how this approximation may be further leveraged to encourage Hanabi agents to make maximally informative hint actions. Since each agent $C^j$ is in the observation of every other agent $i$, each agent may independently compute $D(C^j ~|| ~b_1^{ij})$, where $D$ is some divergence measure between the true hand and $i$'s approximation to $j$'s belief. Partially motivated by \cite{Tian2018LearningMI} we encourage our agents to minimize this quantity by providing the intrinsic reward $r^i_{c,t}$ given by:
\begin{align}
        r_{c,t}^i &= \max_{j\ne i} \sum_{k=1}^\eta\bigl[{W_p(\psi(C_t^j[k]),\hat{b}_{1;t}^{i j})}  - W_p(\psi(C_{t+1}^j[k]),\hat{b}_{1;t+1}^{i j})\bigr]
\end{align}

\noindent
where $W_p(\psi(C_t^j[k]),\hat{b}_{1;t}^{ij})$ is the Wasserstein metric and $\psi(C_t^j[k])$ is the one-hot encoding of the cards in player $i$'s hand. In our experiments we use $p=2$ and weight the intrinsic communication reward by a hyperparameter $\beta$, which roughly trades off between sharing information via hint actions and engaging in more aggressive play and discard strategies. We hypothesize that using this reward term usefully biases the agents' value function towards more conservative play and helps infer the effectiveness of hint actions. This is accomplished by determining how incorrect our estimated belief for other players are by comparing to their ground truth hands. As such, we can directly measure how much a given hints shifts the distribution to match the ground truth and determine whether there is a belief significantly different from the truth to warrant performing a hint action rather than a play action.

\subsection{ToM Rainbow Agent}
Own-hand beliefs $b_0^i$ were represented as flattened vectors $b_0^i \in \mathbf{R}^{\eta \times |F\times V|}$ which replace the partial card knowledge encoding component of the Hanabi environment's default observation vector. In experiments which reason over first-order beliefs, both belief levels $\{b_0^i,b_1^{ij}\}$ were concatenated. The resultant modified observation vector is roughly the same dimension as the default in Hanabi. Our models do not use or require observation history stacking, since $b^0_i$ is a sufficient statistic for any other agent $j$'s private knowledge of $i$'s hand given the most recent hint and $i$'s observation.

\begin{figure}[t!]
    \centering
    \includegraphics[scale=.22]{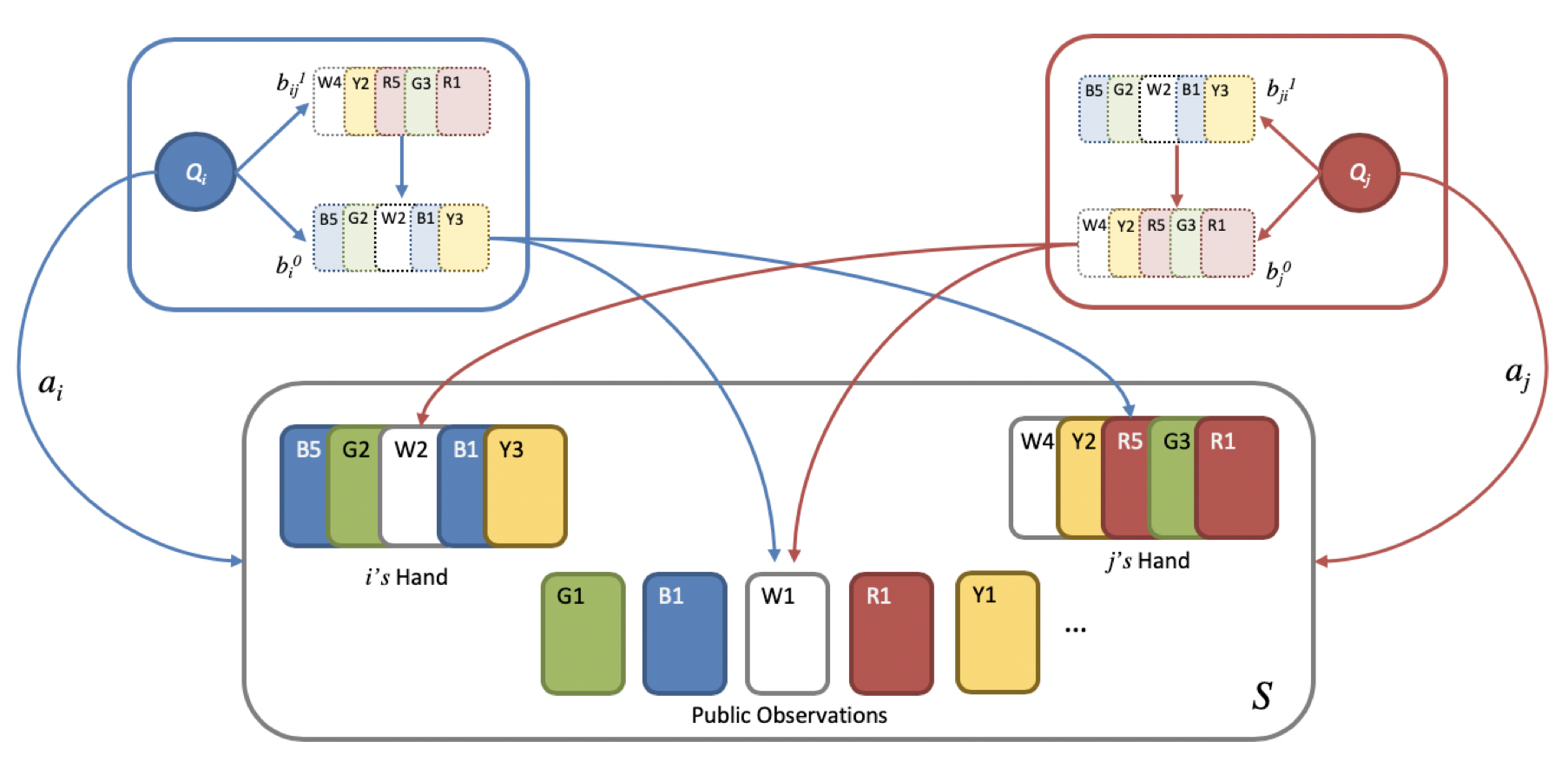}
\caption{ToM architecture for Hanabi. Fully colored cards denote the state of the game board and pastel colored hands indicate agents' belief over unobserved parts of the game state and one anther's beliefs, given their observations.}
\end{figure}

Computing $b_1^{ij}$ requires marginalizing over the agent's own-hand belief $b^0_i$. We have proposed two mechanisms for mitigating this issue, the latter of which we explore empirically. The first is to obtain a Monte-Carlo approximation of $b^j_0$ by iteratively sampling from agent $i$'s own-hand belief, computing $b^0_j$ given each sample, and weighting the associated probabilities by the probability of the sampled hand under agent $i$'s current own-hand belief from equation \ref{belief_eq11}.

We observed that the empirical runtime cost of the MC variant of our algorithm scales roughly linearly in the number of samples drawn from $b^{ij}_1$. Small sample sizes produced high variance estimates of $b^{ij}_1$ and empirically lead to low scores. In order to yield a more computationally tractable algorithm for estimating $b^{ij}_1$, we instead compute this approximation given the current maximum a posteriori hand under agent $i$'s own-hand belief $C^j_{MAP}$.

All agents including the baseline were trained using the value-based Rainbow algorithm and shared all associated hyperparameters. Unless otherwise noted, we train with $\gamma=0.99$, $V_{max}=25$, update horizon of one and 51 atoms to approximate the value function distribution. We train our models using using RMSProp \cite{Tieleman2012} with a learning rate of $0.0025$, decay of $0.95$, 0.0 momentum, and $\epsilon=10^6$.

\subsection{Experiments}
We first assess total scores and sample efficiency of our method against a baseline Rainbow agent. As described, these experiments share hyperparameters with the code associated with the Hanabi Rainbow baseline in \cite{1902.00506}. All models were trained in the two player sample limited version of Hanabi for 20k iterations of 500 episodes each. We present final evaluation of all agent scores at $10^8$ total timesteps (individual interactions with the environment) to conform with the sample-limited regime guidelines.

Intuitively, our intrinsic reward mechanism should increase the frequency of hint actions that are informative under the current policy and state of the game board. We explore whether this is in fact the case by first assessing the sample efficiency and final game score achieved by ToM depth-1 agents trained with our hint reward weighted by a range of $\beta$ values, where $\beta=(0,2,7,25)$.

\section{Results}
Our full method shows improved sample efficiency during the early stages of training as well as better evaluation scores over the baseline. Our model's performance in self-play after being trained also demonstrates slightly reduced variance against the baseline. In ablation experiments, we found that agents that implemented belief level 1 were at least as performant as belief level 0 (in terms of median scores) and outperform belief level 0 in the three and four player cases. Both methods show a larger margin of improvement over the baseline Rainbow agent in settings with a greater number of players; with an improvement of two points in the five player setting. It is interesting to note that simply providing the nested belief representation to our agents did not improve their utility above the baseline model. However, in combination with the proposed intrinsic reward, agents score above our baseline model and achieve scores exceeding those of the best sample limited agent reported in \cite{1902.00506}.

\begin{figure}[h!]
    \begin{center}
    \includegraphics[width=.4\textwidth]{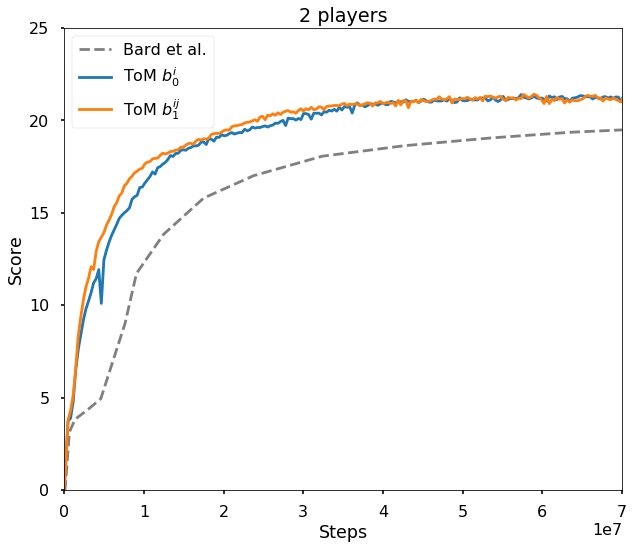}
    \includegraphics[width=.4\textwidth]{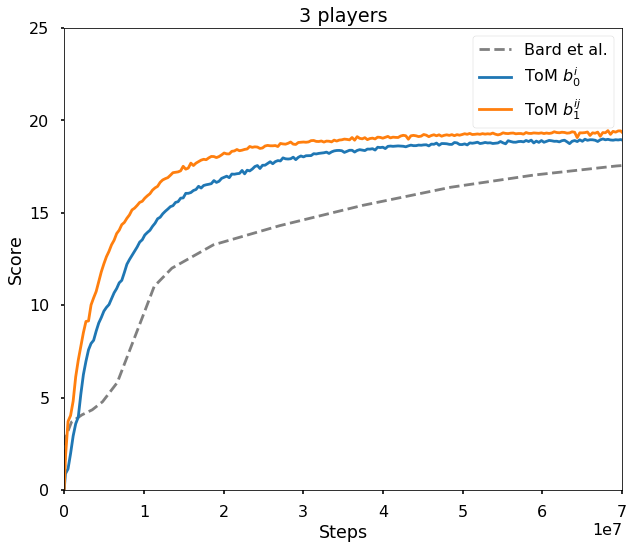}
    \includegraphics[width=.4\textwidth]{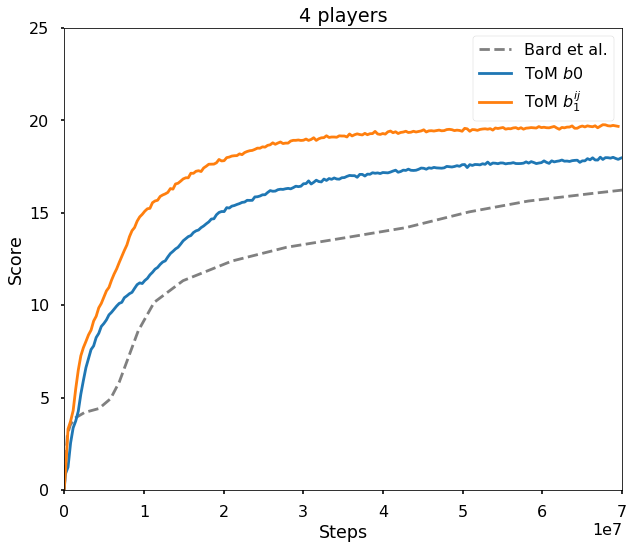}
    \includegraphics[width=.4\textwidth]{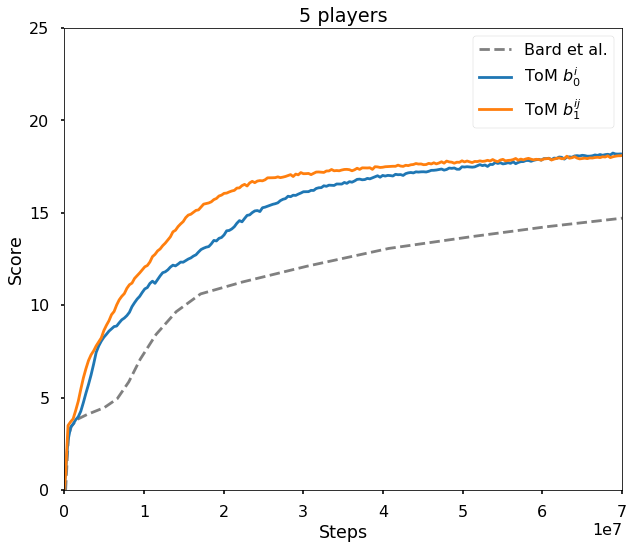}
    \end{center}
    \caption{Average Score Per iteration, 2 and 3 player games in sample-limited self-play}
    \label{fig:my_label}
\end{figure}

\begin{table}[]
\centering
%\resizebox{\columnwidth}{!}{%
\begin{tabular}{|l|cccc|}
\hline
Agent          & 2P                                                                & 3P                                                                & 4P                                                                & 5P                                                                \\ \hline
Rainbow        & \begin{tabular}[c]{@{}c@{}}21 \\ 20.64 (.22)\end{tabular}         & \begin{tabular}[c]{@{}c@{}}19\\ 18.71 (.20)\end{tabular}          & \begin{tabular}[c]{@{}c@{}}18\\ 18.0 (.17)\end{tabular}           & \begin{tabular}[c]{@{}c@{}}17\\ 15.26 (.18)\end{tabular}          \\ \hline
ToM $b_0^i$    & \textbf{\begin{tabular}[c]{@{}c@{}}22\\ 21.55 (.07)\end{tabular}} & \begin{tabular}[c]{@{}c@{}}19\\ 19.78 (.07)\end{tabular}          & \begin{tabular}[c]{@{}c@{}}18\\ 17.40 (.09)\end{tabular}          & \textbf{\begin{tabular}[c]{@{}c@{}}19\\ 18.87 (.07)\end{tabular}} \\ \hline
ToM $b_1^{ij}$ & \textbf{\begin{tabular}[c]{@{}c@{}}22\\ 21.43 (.08)\end{tabular}} & \textbf{\begin{tabular}[c]{@{}c@{}}20\\ 19.76 (.08)\end{tabular}} & \textbf{\begin{tabular}[c]{@{}c@{}}19\\ 19.13 (.09)\end{tabular}} & \textbf{\begin{tabular}[c]{@{}c@{}}19\\ 18.49 (.06)\end{tabular}} \\ \hline
\end{tabular}%
%}
\caption{Results for the three agents: Rainbow, ToM belief level 0 $b_0^i$ and belief level 1 $b_1^{ij}$. Shown in each row are the median scores of trained agents over 1000 episodes of self-play followed by mean scores and (standard error of the mean). Rainbow scores are the best agent results from \cite{1902.00506} ToM agents are the results of our proposed agents}
\label{tab:my-table}
\end{table}

\begin{figure}[h!]
    \centering
    \includegraphics[width=0.45\textwidth]{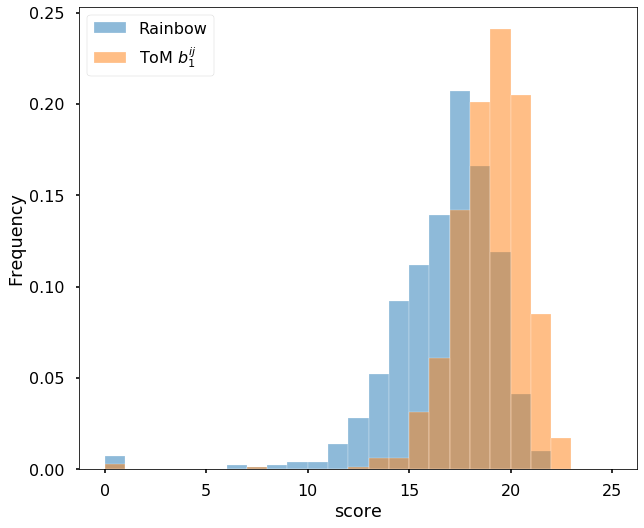}
    \caption{Score distributions from best Rainbow vs ToM agents. Score distributions were generated by running each agent for 1000 episodes in self-play after training}
\end{figure}

We next report the impact of our intrinsic communication reward for various values of $\beta$ which trades off between the weight placed on the environmental reward, which is always the final score of the game when it terminates, versus the intrinsic reward. We observe that $\beta=2$ achieves higher scores than other alternatives. This performance increase is fairly robust to a wide range of choices for this hyperparameter. However, in Hanabi, we observe performance falls off sharply for $\beta > 10$. We speculate that in such situations, agents trained with too large a $\beta$ will exhaust their communication tokens attempting to fully reveal one another's hands as quickly as possible while ignoring the task of building the required firework decks and winning the game.

\begin{figure}[h!]
    \centering
    \includegraphics[width=0.46\textwidth]{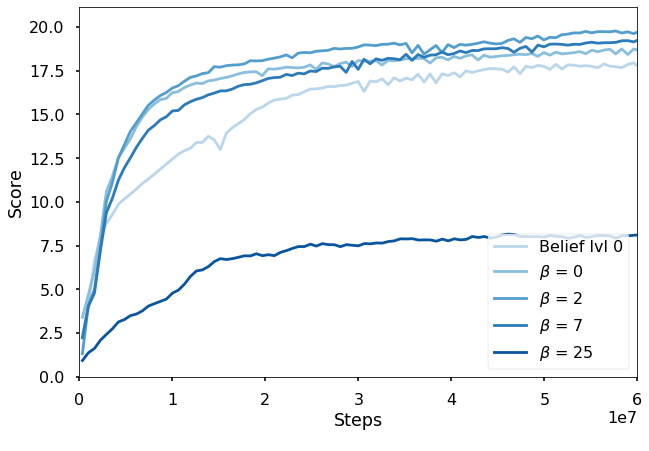}
    \includegraphics[width=0.44\textwidth]{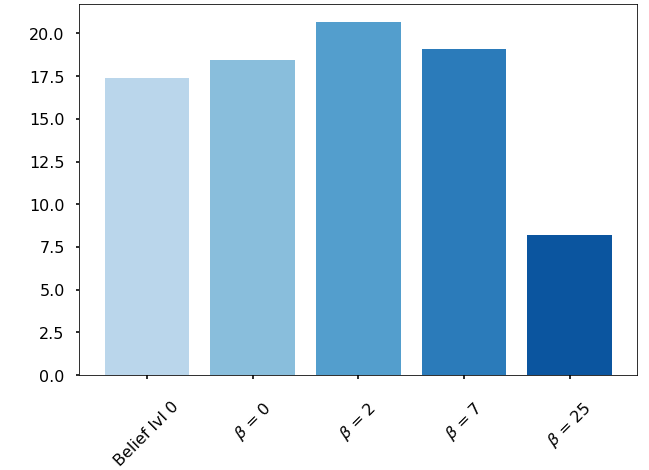}
\caption{Effect of intrinsic communication reward on expected return. (left) Training curves for different choices of $\beta$. (right) Resulting scores after training for the same values of $\beta$}
\end{figure}

\section{Discussion}

We have proposed an approach to achieving theory of mind reasoning over nested beliefs using reinforcement learning for Hanabi agents. We have defined the necessary hint-conditional posterior distribution over player hands and proposed two efficient algorithms for constructing approximations to other players' belief about their own hands. With these tools we are able to incentivize efficient hint actions using a unique intrinsic reward term defined over the other agent's belief.

Our approach shows promise for further investigation in both Hanabi and other ToM reasoning tasks. In future work, it will be interesting to empirically assess the scalability of our algorithm to deeper belief nesting. We  conjecture that the intrinsic reward mechanism elicits changes in the action distribution that cannot be clearly intuited from measuring extrinsic reward. Also, we will explore methods for characterizing the strategies learned by our method to more efficiently identify the grounded hint conventions RL agents may be learning from the inclusion of nested ToM beliefs.

\bibliographystyle{plainnat}
\bibliography{neurips_2019}

% ~~~~~~~~~~~~~~ Non-Unique Proof ~~~~~~~~~~~  
\newpage
\begin{table*}
\centering
\section*{Appendix}
\subsection*{A.1 Proof Equation \ref{belief_eq09} Sums to 1}
\setcounter{section}{1}
\begin{theorem}
$\displaystyle \sum\limits_{C^i = (c_1, c_2,...c_\eta) \in \bigtimes \limits_{\eta, h \in H^i}  \nu(h)} P(C^i | H^i, \eta) = 1$
\end{theorem}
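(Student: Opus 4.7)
The plan is a direct double-counting argument: introduce a set of ``valid distinguishable assignments'' and count it two ways, recovering the denominator on one side and the numerator summed over hands on the other.

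I would first define, for the fixed hint tuple $H^i$, the set $\mathcal{A}$ of assignments mapping each hand position $k \in \{1, \ldots, \eta\}$ to a specific distinguishable copy chosen from the multiset $\nu(h_k)$, subject to no copy being used more than once within a hint block $P_h := \{k : h_k = h\}$. (Different hint blocks draw independently from their respective sub-multisets $\nu(h)$, which matches the modelling assumption baked into the formula.) Filling each hint block independently, an ordered selection without replacement of $\lambda^i(h)$ distinguishable copies from the $|\nu(h)|$ copies of $\nu(h)$ is counted by the falling factorial $(|\nu(h)|)_{\lambda^i(h)}$, so
\[
|\mathcal{A}| = \prod_{h \in H^*} (|\nu(h)|)_{\lambda^i(h)},
\]
which is precisely the denominator of $P(C^i | H^i, \eta)$.

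I would then recount $|\mathcal{A}|$ by projecting each assignment to the type-tuple $C^i \in Supp(\nu(h_1)) \times \cdots \times Supp(\nu(h_\eta))$ it induces, and grouping. For a fixed valid $C^i$ with multiplicities $\delta^i_c(h)$, the assignments projecting onto $C^i$ are obtained, independently across $(c,h) \in K^i$, by filling the $\delta^i_c(h)$ positions in $P_h$ labelled with type $c$ via an ordered tuple of $\delta^i_c(h)$ distinct copies drawn from the $n_\nu(c|h)$ copies of type $c$ in $\nu(h)$. This yields $(n_\nu(c|h))_{\delta^i_c(h)}$, and the product over $(c,h) \in K^i$ is exactly the numerator of $P(C^i | H^i, \eta)$. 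Summing over $C^i$ gives
\[
|\mathcal{A}| = \sum_{C^i} \prod_{(c,h) \in K^i}(n_\nu(c|h))_{\delta^i_c(h)}.
\]
Equating the two expressions for $|\mathcal{A}|$ and dividing by $\prod_{h \in H^*} (|\nu(h)|)_{\lambda^i(h)}$ yields the identity.

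The main obstacle is keeping the bookkeeping consistent: $\nu(h)$ must be interpreted simultaneously as a multiset of distinguishable copies (to enumerate $\mathcal{A}$) and as a set of card types (for the domain of the outer sum over $C^i$). Once one fixes these conventions and verifies that assignments across distinct hint blocks are genuinely independent under the formula, both counts follow from the standard fact that sampling $r$ distinguishable items without replacement from $N$ yields $(N)_r$ ordered sequences, and each hint-indexed factor in the sum cancels against its matching denominator factor.
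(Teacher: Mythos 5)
Your double-counting argument is correct, and it is a genuinely different route from the paper's proof. The paper proceeds algebraically: it pulls the ($C^i$-independent) denominator out of the sum, expands each $|\nu(h)|$ as $\sum_{c\in Supp(\nu(h))} n_\nu(c|h)$, and then telescopes hint block by hint block, factoring the partial sum over each block out of the numerator so that it cancels the matching falling-factorial factor $\bigl(|\nu(h)|\bigr)_{\lambda^i(h)}$ in the denominator; the step that makes each cancellation legitimate is exactly the Vandermonde-type convolution $\sum \prod_{c}(n_\nu(c|h))_{\delta^i_c(h)} = \bigl(\sum_{c} n_\nu(c|h)\bigr)_{\lambda^i(h)}$ taken over ordered fillings of the block, which the paper invokes implicitly and iterates with a ``continue for all unique hints'' ellipsis. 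Your proof instead constructs the set $\mathcal{A}$ of position-to-distinguishable-copy assignments (without replacement within each hint block) and counts it two ways, recovering the denominator from the blockwise count and the summed numerator from the fibering over induced type-tuples. What your version buys is precisely the lemma the paper leaves unstated: the per-block identity is the $|H^*|=1$ case of your count, so your argument simultaneously proves it and dispenses with the induction over blocks; it also makes the probabilistic reading of equation (3) explicit (dealing within each hint block without replacement, independently across blocks), and you are right to flag the cross-block independence as a modelling assumption of the formula rather than something to be proved. What the paper's version buys is a direct view of which denominator factor each piece of the numerator annihilates, at the cost of resting on an uncited combinatorial identity. Both correctly handle the fact that the outer sum ranges over ordered tuples $C^i$, so no multinomial coefficients need to be introduced.
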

\begin{proof}[Proof]
Assume there are $s$ unique hints, so $|H^*| = s$ and $r$ unique hint-card combinations, so $|K^i| = r$. 
  
% probability form
\hspace{3mm} $\sum\limits_{C^i = (c_1, c_2,...c_\eta) \in \bigtimes \limits_{\eta, h \in H^i}  \nu(h)} P(C^i | H^i, \eta)$

% replace probability with combinatorial formula
$ = \sum\limits_{C^i \in \bigtimes \limits_{\eta, h \in H^i}  \nu(h)} \frac{\prod \limits_{(c,h)\in K^i}(n_\nu (c|h))_{\delta^i_c(h)}}{ \prod\limits_{h\in H^*}({|\nu(h)|})_{\lambda^i(h)}}$

% remove denominator from sum over all card combos
$= \frac{{ \sum\limits_{{C}^i \in \bigtimes \limits_{\eta, h \in H^i}  \nu(h)} }\hspace{1mm}  \prod \limits_{(c,h) \in {K}^i} (n_\nu (c|h))_ {\delta^i_c(h)}}{\prod \limits_{h \in H^*} (|\nu(h)|)_{\lambda^i(h)}} $

% substitute in equivalencies for certain formulas
$ = \frac{\sum\limits_{{C}^i \in \bigtimes \limits_{\eta, h \in H^i}  \nu(h)} \hspace{1mm}  \prod \limits_{(c,h) \in {K}^i} (n_\nu (c|h))_{\delta_c^i(h)}}{\prod \limits_{h \in H^*} \left( \sum\limits_{c\in Supp(\nu(h))} n_\nu (c|h) \right)_{\lambda^i(h)}}  $

% expand out products on numerator and denominator
$= \frac{\sum\limits_{{C}^i \in \bigtimes \limits_{\eta, h \in H^i}  \nu(h)} \left[ 
 (n_\nu (c_1|h_1))_{\delta_c^i(h_1)} \cdots (n_\nu (c_k|h_r))_{\delta_c^i(h_r)}
  \right] }{ 
 \left(\sum\limits_{c\in Supp(\nu(h_1))} n_\nu (c|h_1)\right)_{\lambda^i(h_1)} 
\cdots 
\left(\sum\limits_{c\in Supp(\nu(h_s))} n_\nu (c|h_s)\right)_{\lambda^i(h_s)} 
} 
$\\
\vspace{2.5mm}
Factor out the hint $h_r$ from the sum over all possible card hands on the numerator, which consequently requires factoring out all $c \in C^i$ associated with that hint. Those card-hint combinations can be expressed by the set $K^* \subseteq K^i \ni h_k = h_r$. Without loss of generality, assume $h_r = h_s$, thus $\sum\limits_c \delta^i_c(h_r) = \lambda^i(h_r) = \lambda^i(h_s)$.

$ = \frac{ 
	\sum\limits_{{C}^i \in \bigtimes \limits_{\eta - \lambda^i(h_r), h \in H^i}  \nu(h)} 		
	\left[ 
		 (n_\nu (c_1|h_1))_{\delta_c^i(h_1)} 
           \cdots 
		 (n_\nu (c_{r-\lambda^i(h_s)}|h_{k-\lambda^i(h_s)}))_{\delta_c^i(h_{r-\lambda^i(h_s)})}
 	 \right] 
 	\cancel{\left(\sum\limits_{c\in Supp(\nu(h_r))} n_\nu (c|h_r)\right)_{\sum\limits_c \delta_c^i(h_r)}}
      }{ 
	\left[
		 \left(\sum\limits_{c\in Supp(\nu(h_1))} n_\nu (c|h_1)\right)_{\lambda^i(h_1)} 
	\hspace{1mm}  \cdots \hspace{1mm}
		\cancel{\left(\sum\limits_{c\in Supp(\nu(h_s))} n_\nu (c|h_s)\right)_{\lambda^i(h_s)}} 
	\right]
    } $
\\
\vspace{2.5mm}
Continue on for all unique hints.\\
\vspace{1.5mm}

\hspace{.5mm} \vdots\\
\vspace{2.5mm}

$= 1$

\noindent\\
\vspace{2.5mm}
$\therefore \sum\limits_{C^i = (c_1, c_2,...c_\eta) \in \bigtimes \limits_{\eta, h \in H^i}  \nu(h)} P(C^i | H^i, \eta) = 1$

\end{proof}
\end{table*}
\begin{table*}
\subsection*{A.2 Equations 3 - 6 Example}
\begin{enumerate}
	\item Given hints: [R, G, R, 1, 2]
	\item Valid Cards and Counts given hint i.e. $n_\nu(c|h)$ Counting of 0 for any card not listed:
	\begin{itemize}
		\item Card1: [R3: 2, R4: 2, R5: 1]
		\item Card2: [G3: 2, G4: 2, G5: 1]
		\item Card3: [R3: 2, R4: 2, R5: 1]
		\item Card4: [Y1: 3, W1: 3, B1: 3]
		\item Card5: [Y2: 2, W2: 2, B2: 2]
	\end{itemize}
	\item Calculating $b_0^i$ using samples without replacement (equation 3): \\
		$b_0^i \sim P([R3, G3, R4, W1, W2]|[R, G, R, 1, 2], 5)$ 
		\\ \hspace{35pt} = $(2/5)(2/5)((2 - 1)/(5-1))(3/9)(2/6)$
		\\ \hspace{35pt} = $1/225$
	\item Calculating $\hat{b}_0^i$ using samples with replacement (equation 4): \\
$\hat{b}_0^i = (2/5)(2/5)(2/5)(3/9)(2/6) = 8/1125$
	\item Calculating $\hat{b}_0^{ij}$ follows the same method as $b_0^i$, but first relies on modifying the counts to approximate agent $j$'s perspective. This is done by using a hand sampled by $b_0^i$ as the cards observed for player $i$'s hand from $j$'s perspective. Additionally, the cards observed in $j$'s hand from $i$'s perspective are added back to the set of unobserved cards. This gives all the necessary information for repeating equation 3 for agent $j$'s perspective.
	\item Equation 6 is equivalent to equation 4, but with the same updated counts as defined in the previous section to simulate $j$'s perspective.

\end{enumerate}

\end{table*}

\end{document}